%%%%%%%%%%%%%%%%%%%%%%%%%%%%%%%%%%%%%%%%%%%%%%%%%%%%%%%%%%%%%%%%%%%%%%%%%%%%%%%%
%2345678901234567890123456789012345678901234567890123456789012345678901234567890
%        1         2         3         4         5         6         7         8

\documentclass[letterpaper, 10 pt, conference]{ieeeconf}  % Comment this line out if you need a4paper

\IEEEoverridecommandlockouts                              % This command is only needed if 
                                                          % you want to use the \thanks command

\overrideIEEEmargins                                      % Needed to meet printer requirements.

%In case you encounter the following error:
%Error 1010 The PDF file may be corrupt (unable to open PDF file) OR
%Error 1000 An error occurred while parsing a contents stream. Unable to analyze the PDF file.
%This is a known problem with pdfLaTeX conversion filter. The file cannot be opened with acrobat reader
%Please use one of the alternatives below to circumvent this error by uncommenting one or the other
%\pdfobjcompresslevel=0
%\pdfminorversion=4

% See the \addtolength command later in the file to balance the column lengths
% on the last page of the document

% The following packages can be found on http:\\www.ctan.org
%\usepackage{graphics} % for pdf, bitmapped graphics files
%\usepackage{epsfig} % for postscript graphics files
%\usepackage{mathptmx} % assumes new font selection scheme installed
%\usepackage{times} % assumes new font selection scheme installed
%\usepackage{amsmath} % assumes amsmath package installed
%\usepackage{amssymb}  % assumes amsmath package installed

\usepackage{amsmath}
\usepackage{amssymb}
\usepackage{dsfont}
\usepackage{color}
\usepackage{subcaption}
\usepackage{graphicx}

\usepackage{theorem}

\usepackage{booktabs}
\usepackage{multirow}

\usepackage{bm}

\usepackage{setspace}

\title{\LARGE \bf
Signal Temporal Logic Neural Predictive Control
}

\usepackage{xcolor}
\newcommand{\rbt}[1]{\textcolor{black}{#1}}

\author{Yue Meng and Chuchu Fan% <-this % stops a space
\thanks{*This work was supported by the Ford Motor Company and C3.ai Digital Transformation Institute.}% <-this % stops a space
% \thanks{$^{1}$Albert Author is with Faculty of Electrical Engineering, Mathematics and Computer Science,
%         University of Twente, 7500 AE Enschede, The Netherlands
%         {\tt\small albert.author@papercept.net}}%
% \thanks{$^{2}$Bernard D. Researcheris with the Department of Electrical Engineering, Wright State University,
%         Dayton, OH 45435, USA
%         {\tt\small b.d.researcher@ieee.org}}%
% }
\thanks{Yue Meng and Chuchu Fan are with the Massachusetts Institute of Technology, Cambridge, MA 02139 USA (email: mengyue@mit.edu; chuchu@mit.edu).}%
}

\newcommand{\partitle}[1]{\noindent{\textbf{#1:}}}

\newtheorem{theorem}{Theorem}
\newtheorem{problem}{Problem}

\begin{document}
\maketitle
\thispagestyle{empty}
\pagestyle{empty}
\begin{abstract}
Ensuring safety and meeting temporal specifications are critical challenges for long-term robotic tasks. Signal temporal logic (STL) has been widely used to systematically and rigorously specify these requirements. However, traditional methods of finding the control policy under those STL requirements are computationally complex and not scalable to high-dimensional or systems with complex nonlinear dynamics. \rbt{Reinforcement learning (RL) methods can learn the policy to satisfy the STL specifications via hand-crafted or STL-inspired rewards, but might encounter unexpected behaviors due to ambiguity and sparsity in the reward.} In this paper, we propose \rbt{a method to} directly learn a neural network controller to satisfy the requirements specified in STL. Our controller learns to roll out trajectories to maximize the STL robustness score in training. In testing, similar to Model Predictive Control (MPC), the learned controller predicts a trajectory within \rbt{a} planning horizon to ensure the satisfaction of the STL requirement in deployment. A backup policy is \rbt{designed} to ensure safety when our controller fails. Our approach can adapt to various initial conditions and environmental parameters. We conduct experiments on six tasks, \rbt{where our method with the backup policy outperforms the classical methods (MPC, STL-solver), model-free and model-based RL methods in STL satisfaction rate, especially on tasks with complex STL specifications while being 10X-100X faster than the classical methods}.
\end{abstract}
\section{Introduction}
% there should be a better word to describe temporal safety or rule constraints

% \chuchu{Many people raise the question ``Given the stochastic and numerical nature of gradient descent its impossible to empirically achieve exactly zero loss during training." We should keep this in mind and have it covered somehow. Maybe we could make the ``backup" MILP controller more explicit.}
Learning to control a robot to satisfy long-term and complex safety requirements and temporal specifications is critical in autonomous systems and artificial intelligence. For example, the vehicles should make a complete stop before entering the intersection with a stop sign, wait a few seconds, and then drive through it if no other cars are there. And a robot navigating through obstacles to reach the destination should always reach and stay at a charging station for a while to get charged when it is low on battery. 

However, designing the controller to satisfy those specifications is challenging. Traditional rule-based methods often require expert knowledge and several rounds of trial and error to find the best design to handle the problem. Other learning-based approaches either learn from demonstrations or rewards to find the control policy to satisfy the behavior specification. Those methods need plenty of expert data or great effort in reward design (an improper reward will result in a learned policy to generate unexpected behaviors)

To let the controller satisfy the exact behaviors, another direction of solving this problem is to describe the requirements in Signal Temporal Logic (STL) and find out the feasible plan by solving an online optimization problem. \rbt{Mixed-integer linear programming (MILP)~\cite{sun2022multi} is proposed to handle simple dynamics and easy-to-evaluate atomic propositions, which has exponential complexity and is hard to solve}. For more complicated systems, gradient-based~\cite{dawson2022robust} and sampling-based methods (such as Cross-Entropy Method (CEM) and covariance matrix adaptation evolution strategy (CMA-ES)~\cite{kapoor2020model}) are proposed to synthesize controllers to maximize the STL robustness score (which measures how well the STL is satisfied). However, they still need to solve the problem online for each initial state condition and each scenario, which limits their usage for more general cases.

Motivated by the line of work in robustness score~\cite{donze2010robust} and controller synthesis~\cite{dawson2022robust}, we propose Signal Temporal Logic Neural Predictive Control, which learns a Neural Network \rbt{(NN)} controller to generate \rbt{STL-satisfied trajectories}. The ``STL solving" process is conducted in training, and in the test phase, we use the trained controller (potentially with a backup policy) to roll out trajectories. \rbt{Thus}, we do not need the heavy online optimization/searching required for those gradient-based or sampling-based methods. 

The whole pipeline is as follows: We construct an \rbt{NN} controller and sample from the initial state (which might include environment information) distribution. In training, we roll out trajectories using the NN controller. We evaluate the approximated robustness score on those trajectories to maximize the robustness score. In testing, we follow the MPC procedure: our learned controller predicts a trajectory that is safe/rule-satisfied in the short-term horizon, and we pick the first (or first several) actions in the deployment. Finally, when the learned controller is detected violating the STL constraints, a sampling-based backup policy is triggered to guarantee the robot's safety.
%In this way, we guarantee the safety or rule-correctness in the long term.

We conduct experiments on \rbt{six tasks shown in Fig.~\ref{fig:screenshot}:} driving near intersections, reach-and-avoid problem, safe ship control, safe ship tracking control, robot navigation and manipulation. On tasks with simple dynamics or simple STL, our approach is on par with the RL methods in terms of STL accuracy and we surpass traditional methods such as MPC and STL solvers. We achieve the highest STL accuracy on hard tasks such as ship safe tracking control and robot navigation tasks, $20\%\sim40\%$ higher than the second best approach. \rbt{Our training time is similar to RL, and our inference speed is 1/10-1/100X faster than classical methods.}

\begin{figure*}[htbp]
\centering
    \includegraphics[width=1.0\textwidth]{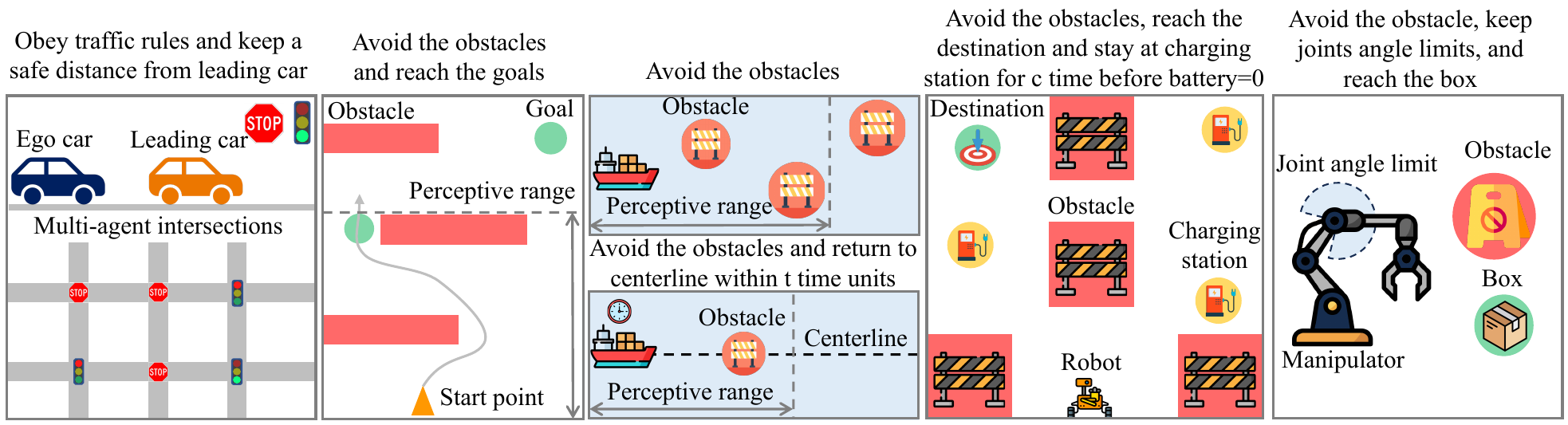}
    % \caption{Five simulation benchmarks: adaptive cruise control under traffic rules, reach-and-avoid game, ship safe control, ship safe tracking control, and robot navigation. }
    \caption{Benchmarks: learning traffic rules, reach-and-avoid game, ship safe / tracking control, navigation and manipulation. }
    \label{fig:screenshot}
\end{figure*}

Our contributions are: (1) \rbt{we are the first to use NN controllers to predict trajectories to satisfy STL in a self-supervised manner (without demonstrations)} (2) we propose a backup policy to ensure the safety of the robot when the learned policy fails to generate STL-satisfied trajectories (3) we conduct challenging experiments with complicated dynamics and STL constraints and outperform other baselines.

% This template provides authors with most of the formatting specifications needed for preparing electronic versions of their papers. All standard paper components have been specified for three reasons: (1) ease of use when formatting individual papers, (2) automatic compliance to electronic requirements that facilitate the concurrent or later production of electronic products, and (3) conformity of style throughout a conference proceedings. Margins, column widths, line spacing, and type styles are built-in; examples of the type styles are provided throughout this document and are identified in italic type, within parentheses, following the example. Some components, such as multi-leveled equations, graphics, and tables are not prescribed, although the various table text styles are provided. The formatter will need to create these components, incorporating the applicable criteria that follow.

\section{Related Work}

Temporal logic (LTL~\cite{pnueli1977temporal}, MTL~\cite{koymans1990specifying}, and STL~\cite{maler2004monitoring}) can express rich and complex robot behaviors and hence is useful for controller verification and synthesis. Plenty of motion planning algorithms have been aiming to fulfill STL specifications. We refer the readers to this survey~\cite{plaku2016motion}. 

Abstraction-based methods~\cite{tabuada2003model,fainekos2009temporal,mcmahon2014sampling} emerged the earliest, where an automaton or a graph is constructed and the search-based planning is conducted on this discrete form of abstraction. These methods often require domain knowledge and are challenging to construct automatically. 

The optimization-based approach came out for specific dynamics (e.g., linear), where the STL specifications are modeled as linear constraints and the control policy is found via Convex Quadratic Programming~\cite{lindemann2018control} or Mixed Integer Programming (MIP) ~\cite{yang2020continuous}. Though they do not need discretization or domain expertise, the computation cost is still too high and they cannot solve complicated systems. As a remedy, ~\cite{sun2022multi} plans feasible line segments and then uses tracking controllers to adapt to complex dynamics, and ~\cite{zhang2023modularized} uses separation principles to boost the MILP-solving process.

To better handle complex dynamics, sampling-based approaches are proposed which are computationally efficient and suitable for real-time applications. Representative works include STyLuS*~\cite{kantaros2020stylus} that uses biased sampling and its concurrent works~\cite{vasile2017sampling,vasile2020reactive,karlsson2020sampling} which use RRT or RRT*. Another line of work to cope with nonlinear dynamics is gradient-based methods. A differentiable measure for MTL satisfaction is developed in~\cite{pant2017smooth}. Inspired by it, STL-cg~\cite{leung2019backpropagation} handles backpropagation for (parametrized) STL formulas under machine learning frameworks. \rbt{The work}~\cite{pantazides2022satellite} learns STL for complex satellite mission planning. The work~\cite{dawson2022robust} provides a counter-example guided framework to learn robust control policies. However, gradient descent is well-known to be slow and might stuck into the local minimum. Both types of methods in this section may not guarantee the optimality or completeness of the solution.

Recently, machine learning has been widely used for STL/LTL controller synthesis, where neural networks (NN) or other forms of parametrized policy are trained to satisfy STL. These works can be further categorized into model-free reinforcement learning (RL)~\cite{aksaray2016q,li2017reinforcement,balakrishnan2019structured}, model-based RL (MBRL)~\cite{cho2018learning,kapoor2020model,cohen2021model}, and imitation learning~\cite{puranic2021learning,liu2021recurrent,hashimoto2022stl2vec}. Our method is similar to MBRL, where we train an NN controller via stochastic gradient descent (SGD) to predict policy sequences to maximize the robustness score.

% \partitle{Reinforcement Learning}

% \partitle{Safe planning via control barrier certificates}

% \partitle{Planning with Signal Temporal Logic}

% 
\section{Preliminaries \rbt{and Problem Definition}}
\subsection{Controlled Hybrid System}

Consider a continuous-time hybrid system:
\begin{equation}
\begin{cases}
\begin{aligned}
    \dot{x}=&f(x,u), \quad x\in\mathcal{C} \\
    x^+=&h(x^-), \quad x\in \mathcal{D}
\end{aligned}
\end{cases}
\label{eq:system}
\end{equation}
where $x\in\mathbb{R}^n$ denotes the system state and $u\in\mathbb{R}^m$ denotes the control input. The system state here can contain both the agent and environment information. \rbt{The set} $\mathcal{C}$ is the flow set where states follow a continuous flow map and $\mathcal{D}$ is the jump set where states encounter instantaneous changes. \rbt{A} jump \rbt{captures the scenarios where} the agent updates its local observations or resets timers. Here we consider the states and controls at discrete time steps with time horizon $T$. Given an initial state $x_0$ and a control sequence \rbt{(}$u_0, u_1, ..., u_{T-1}$\rbt{)}, following the system dynamics we can generate a trajectory \rbt{(}$x_0, x_1, ..., x_T$\rbt{)}. In our context, we call this trajectory a trace (or signal) and denote it as $s$. In this paper, we aim to learn a policy that can generate traces satisfying temporal logic properties. The temporal requirements rbt{are} formally introduced below.

\subsection{Signal Temporal Logic (STL)}
An STL formula comprises predicates, logical connectives, and temporal operators~\cite{donze2013efficient}. The predicates are of the form $\mu(x)\geq 0$ where $\mu:\mathbb{R}^n\to \mathbb{R}$ is a function with the state \rbt{$x$} as the input and returns a scalar value. STL formulas are constructed in the Backus-Naur form:
\begin{equation}
    \phi::= \top \ | \ \mu\geq 0 \ | \ \neg \phi \ | \ \phi_1 \land \phi_2 \ | \ \phi_1 \text{U}_{[a,b]}\phi_2
    \label{eq:stl_formula}
\end{equation}
where $\top$ means ``true", $\neg$ means ``negation", $\land$ means ``and", $\text{U}$ means ``until'' and $[a,b]$ is the time interval from $a$ to $b$. Other operators can be written from the elementary operators above, such as ``or": $\phi_1\lor\phi_2=\neg(\neg\phi_1 \land \neg\phi_2)$, ``infer": $\phi_1\implies\phi_2=\neg\phi_1 \lor \phi_2$, ``eventually": $\lozenge_{[a,b]} \phi=\top \text{U}_{[a,b]}\phi$ and ``always": $\square_{[a,b]} \phi = \neg\lozenge_{[a,b]}\neg\phi$. %Given a signal $s$ generated from the , an STL formula $\phi$ is satisfied at time $t$ (which is denoted as $s,t\models \phi$) with the Boolean semantics defined in~\cite{maler2004monitoring}:
We denote $s,t\models \phi$ if a signal $s$ at time $t$ satisfies an STL formula $\phi$. The detailed Boolean semantics in~\cite{maler2004monitoring} is iteratively defined as:
\begin{equation}
    \begin{aligned}
        & s,t \models \top & & (\text{naturally satisfied})\\
        & s,t \models \mu \geq 0  &\Leftrightarrow &\,\,\,\, \mu(s(t))>0 \\
        & s,t \models \neg \phi  &\Leftrightarrow &\,\,\,\, s,t\not\models \phi \\
        & s,t \models \phi_1\land \phi_2  &\Leftrightarrow &\,\,\,\, s,t\models \phi_1\text{ and } s,t\models \phi_2\\
        & s,t \models \phi_1\lor\phi_2  &\Leftrightarrow &\,\,\,\, s,t\models \phi_1\text{ or } s,t\models \phi_2\\
        & s,t \models \phi_1\implies\phi_2  &\Leftrightarrow &\,\,\,\, \text{if }  s,t\models \phi_1\text{ then } s,t\models \phi_2 \\
        & s,t \models \phi_1\text{U}_{[a,b]}\phi_2  &\Leftrightarrow & \,\,\,\,\exists t' \in [t+a, t+b] \text{ s.t. } s,t'\models \phi_2 \\& & & \,\,\,\,\text{ and } \forall t'' \in [t, t']\,\, s,t''\models \phi_1  \\
        & s,t \models \lozenge_{[a,b]}\phi  &\Leftrightarrow &\,\,\,\, \exists t' \in [t+a, t+b]\,\, s,t'\models \phi \\
        & s,t \models \square_{[a,b]}\phi  &\Leftrightarrow &\,\,\,\, \forall t' \in [t+a, t+b]\,\, s,t'\models \phi \\
        % &s,t \models \top  (\text{naturally satisfied})\\
        % &s,t \models \mu \geq 0  \Leftrightarrow \,\,\,\, \mu(s(t))>0 \\
        % &s,t \models \neg \phi  \Leftrightarrow \,\,\,\, s,t\not\models \phi \\
        % &s,t \models \phi_1\land \phi_2  \Leftrightarrow \,\,\,\, s,t\models \phi_1\text{ and } s,t\models \phi_2\\
        % &s,t \models \phi_1\lor\phi_2  \Leftrightarrow \,\,\,\, s,t\models \phi_1\text{ or } s,t\models \phi_2\\
        % &s,t \models \phi_1\to\phi_2  \Leftrightarrow \,\,\,\, \text{if }  s,t\models \phi_1\text{ then } s,t\models \phi_2 \\
        % &s,t \models \phi_1\text{U}_{[a,b]}\phi_2  \Leftrightarrow \,\,\,\,\exists t' \in [t+a, t+b]\,\, s,t'\models \phi_2 \\
        % & \,\,\,\,\text{ and } s,t''\models \phi_1 \forall t'' \in [t, t'] \\
        % &s,t \models \lozenge_{[a,b]}\phi  \Leftrightarrow \,\,\,\, \exists t' \in [t+a, t+b], s,t'\models \phi \\
        % &s,t \models \square_{[a,b]}\phi  \Leftrightarrow \,\,\,\, \forall t' \in [t+a, t+b], s,t'\models \phi \\
    \end{aligned}
    \label{eq:boolean_semantics}
\end{equation}

% A quantitative semantics $\rho$ also exists to measure how well the trace satisfies the STL formula (the margin of satisfaction)~\cite{donze2010robust}. $\rho$ is called the robustness score, and the formula is satisfied if $\rho>0$ and is violated if $\rho<0$. The robustness score is calculated with the following rules:

To measure how well the trace satisfies the STL formula, ~\cite{donze2010robust} proposes a quantitative semantics called robustness score $\rho$: the STL formula is satisfied if $\rho>0$ and is violated if $\rho<0$, and a larger $\rho$ reflects a larger margin of satisfaction. The robustness score is calculated with the following rules:
\begin{equation}
    \begin{aligned}
        & \rho(s,t, \top)= 1 , \quad \rho(s,t, \mu\geq 0)= \mu(s(t))\\
        & \rho(s,t, \neg \phi)= -\rho(s,t,\phi)\\
        % % & \rho(s,t, \mu\geq 0)= \mu(s(t))\\
        % % & \rho(s,t, \neg \phi)= -\rho(s,t,\phi)\\
        & \rho(s,t, \phi_1 \land \phi_2)= \min\{\rho(s,t,\phi_1),\rho(s,t,\phi_2) \}\\
        & \rho(s,t, \phi_1 \lor \phi_2)= \max\{\rho(s,t,\phi_1),\rho(s,t,\phi_2) \}\\
        & \rho(s,t, \phi_1 \implies \phi_2)= \max\{-\rho(s,t,\phi_1),\rho(s,t,\phi_2) \}\\
        & \rho(s,t, \phi_1 \text{U}_{[a,b]} \phi_2)= \\
        & \quad \sup\limits_{t'\in[t+a,t+b]}\min\left\{\rho(s,t',\phi_2), \inf\limits_{t''\in[t,t']}\rho(s,t'',\phi_1) \right\}\\
        & \rho(s,t, \lozenge_{[a,b]} \phi)= \sup\limits_{t'\in[t+a,t+b]}\rho(s,t',\phi)\\
        & \rho(s,t, \square_{[a,b]} \phi)= \inf\limits_{t'\in[t+a,t+b]}\rho(s,t',\phi)
    \end{aligned}
    \label{eq:robustness_score}
\end{equation}

\rbt{\subsection{Problem Formulation}}
\rbt{With the definition of the dynamical system and STL specifications, the problem under consideration is as follows.}
\begin{problem}
\rbt{Given a system model in Eq.~\eqref{eq:system}, an STL formula $\phi$ in Eq.~\eqref{eq:stl_formula} and an initial state set $\mathcal{X}_0\subset \mathbb{R}^n$, find a control policy $\pi$ such that starting from any state $x\in\mathcal{X}_0$, the trajectory $s_\pi(x)$ of the resulting 
 closed-loop system satisfies the formula $\phi$, i.e., $\forall x \in \mathcal{X}_0, \, s_\pi(x)\models \phi$. }
\end{problem}

% Computing $\rho$ is not differentiable due to the min and max operators. To use gradient-based approaches in controller synthesis, we use the approximated robustness score $\hat{\rho}$ in~\cite{pant2017smooth}, which replaces the max (min) operators in Eq.~\eqref{eq:robustness_score} with the smooth max (min) operators below: 
% \begin{equation}
%     \begin{aligned}
%         &\widetilde{\max}_k(x_1,x_2,...):=\frac{1}{k}\log \left(e^{k x_1} + e^{k x_2} + \cdots \right)  \\
%         &\widetilde{\min}_k(x_1,x_2,...):=-\widetilde{\max}_k(-x_1,-x_2,...)  \\
%     \end{aligned}
%     \label{eq:approximated_robustness_score}
% \end{equation}
% where $k$ is a scaling factor for this approximation.If $k\to \infty$, the approximation is close to the max/min operators. With $\hat{\rho}$, we are ready to set up our controller learning framework.

\section{Methodology}
\subsection{STL Satisfaction as An Optimization Problem}
To satisfy $\phi$, we measure the satisfaction rate of $\phi$ using the robustness score and thus form the boolean STL satisfaction task as an optimization problem. Denote the policy parametrized with $\theta$ as $\pi_\theta$. For $x\in\mathcal{X}_0$, the policy predicts a sequence of controls: $\pi_{\theta}(x)=u_{0:T-1}$. Following system dynamics, we can generate the trajectory $s_{\pi_{\theta}}(x)$ with horizon $T+1$. Then we compute the robustness score $\rho(s_{\pi_{\theta}}(x), \phi)$ for the STL formula $\phi$ on the trajectory $s_{\pi_{\theta}}(x)$ at time 0 (we omit $t$ for brevity). Our goal is: \rbt{(omit dynamic constraints)}
\begin{equation}
    \text{Find } \pi_{\theta}, \text{s.t. } \rho(s_{\pi_{\theta}(x)},\phi)>0, \forall x\in \mathcal{X}_0
\end{equation}
\rbt{Assuming $x$ follows uniform distribution on $\mathcal{X}_0$, we aim to find the optimal policy which maximizes the expected truncation robustness score:}
\begin{equation}
    \begin{aligned}
        \pi^*_{\theta} = \mathop{\arg\max}\limits_{\pi_{\theta}} \mathop{\mathbb{E}}\limits_{x\sim \mathcal{X}_0}\left[ \rbt{\min}\left\{\rho(s_{\pi_{\theta}(x)},\phi), \gamma\right\}\right]
    \end{aligned}
    \label{eq:optimization}
\end{equation}
where $\gamma>0$ is the \rbt{truncation} factor. \rbt{The $\min$ operator in the expectation encourages the policy to improve ``hard" trajectories (with robustness scores $<\gamma$) rather than further increasing ``easy" trajectories that already achieve high robustness scores ($\geq\gamma$). This helps achieve high robustness score for all possible cases}. \rbt{Another advantage is that if the optimal value is $\gamma$, the STL $\phi$ is guaranteed to be satisfied on all sampled initial states.} In the experiments, we set $\gamma=0.5$. An ablation study on $\gamma$ is in Table~\ref{tab:abl_nn}.

\subsection{Neural Network Controller Learning for STL Satisfication}
 \rbt{We aim to solve Eq.~\eqref{eq:optimization} using neural networks. Unlike~\cite{dawson2022robust}, which solves the STL satisfaction online, we learn the control policy in training, and thus, our approach can run in real-time in testing. 
 %Different from~\cite{liu2021recurre\nt}, we don't require nominal controllers or Control Barrier Functions to ensure safety.
We use a fully-connected network (MLP) $\pi_{\theta}$ to represent the control policy. At each training step, we first sample initial states from $\mathcal{X}_0$ and use $\pi_{\theta}$ to predict a sequence of actions. Then we roll-out trajectories based on these actions and calculate the robustness score to form a loss function shown in Eq.~\eqref{eq:optimization}. Finally, we update the parameters of the neural network controller guided by the loss function via stochastic gradient descent.}
 However, there remain two challenges for us to apply the gradient method. First, the hybrid systems in Eq.~\ref{eq:system} are non-differentiable at the mode-switching instant. Secondly, $\rho$ is not differentiable due to the max (min) and sup (inf) operators in Eq.~\eqref{eq:robustness_score}.

 % To tackle the non-smoothness in the hybrid systems dynamics, we assume a membership function $I_{C}:\mathcal{X}\to \mathbb{R}$ exists to imply whether a state $x_t$ is in the flow set ($I_{C}(x_t)>0$) or the jump set ($I_{C}(x_t)<0$). Thus the forward dynamics can be written as $x_{t+1}=\mathbbm{1}\{I_{C}(x_t)>0\} f(x_t,u_t)\Delta t + (1-\mathbbm{1}\{I_{C}(x_t)>0\}) h(x_t)$. Next, we approximate the $\mathbbm{1}(I_{C}(x_t)>0)$ using $\widetilde{I}_{w}(x_t)=(1+\text{Tanh}(w\cdot I_C(x_t)))/2$ with a scaling factor $w>0$  to control the approximation ($\lim\limits_{w\to \infty}\widetilde{I}_{w}(x_t)=\mathbbm{1}(I_{C}(x_t)>0)$). The dynamics now is: %$x_{t+1}=\widetilde{I}_{w}(x_t) f(x_t,u_t)\Delta t + (1-\widetilde{I}_{w}(x_t)) h(x_t)$
  To tackle the non-smoothness in the hybrid systems dynamics, we assume a membership function $I_{C}:\mathcal{X}\to \mathbb{R}$ exists to imply whether a state $x_t$ is in the flow set ($I_{C}(x_t)>0$) or the jump set ($I_{C}(x_t)<0$). Thus the forward dynamics can be written as $x_{t+1}=\mathds{1}\{I_{C}(x_t)>0\} f(x_t,u_t)\Delta t + (1-\mathds{1}\{I_{C}(x_t)>0\}) h(x_t)$. Next, we approximate the $\mathds{1}(I_{C}(x_t)>0)$ using $\widetilde{I}_{w}(x_t)=(1+\text{Tanh}(w\cdot I_C(x_t)))/2$ with a scaling factor $w>0$  to control the approximation ($\lim\limits_{w\to \infty}\widetilde{I}_{w}(x_t)=\mathds{1}(I_{C}(x_t)>0)$). The dynamics now is: %$x_{t+1}=\widetilde{I}_{w}(x_t) f(x_t,u_t)\Delta t + (1-\widetilde{I}_{w}(x_t)) h(x_t)$
\begin{equation}
    x_{t+1}=\widetilde{I}_{w}(x_t) f(x_t,u_t)\Delta t + (1-\widetilde{I}_{w}(x_t)) h(x_t)
    \label{eq:approximated_hybrid_dynamics}
\end{equation}
%  To tackle the non-smoothness in the hybrid systems dynamics, we use a membership function $I_{C}:\mathcal{X}\to \{0,1\}$ to imply whether a state $x_t$ is in the flow set ($I_{C}(x_t)=1$) or the jump set ($I_{C}(x_t)=0$). Thus the forward dynamics can be written as $x_{t+1}=I_{C}(x_t)f(x_t,u_t)\Delta t + (1-I_{C}(x_t)) h(x_t)$. Next, we approximate the $I_{C}(x_t)$ using $\widetilde{I}_{C,w}(x_t)=(1+\text{Tanh}(w\cdot I_C(x_t)))/2$ with $w>0$ a scaling factor to control the approximation degree ($\lim\limits_{w\to \infty}\widetilde{I}_{C,w}(x_t)=\mathbbm{1}(I_{C}(x_t)>0)$). We approximate the hybrid forward dynamics:
% \begin{equation}
%     x_{t+1}=\widetilde{I}_{C,w}(x_t) f(x_t,u_t)\Delta t + (1-\widetilde{I}_{C,w}(x_t)) h(x_t)
%     \label{eq:approximated_hybrid_dynamics}
% \end{equation}
and we can learn STL satisfaction from this hybrid system.

 To backpropagate the gradient through STL, we use the approximated robustness score $\tilde{\rho}$~\cite{pant2017smooth}, which replaces the max (min) and sup (inf) operators with smooth max (min): 
\begin{equation}
    \begin{aligned}
        &\widetilde{\max}_k(x_1,x_2,...):=\frac{1}{k}\log \left(e^{k x_1} + e^{k x_2} + \cdots \right)  \\
        &\widetilde{\min}_k(x_1,x_2,...):=-\widetilde{\max}_k(-x_1,-x_2,...)  \\
    \end{aligned}
    \label{eq:approximated_robustness_score}
\end{equation}
% $\widetilde{\max}_k(x_1,x_2,...):=\frac{1}{k}\log \left(e^{k x_1} + e^{k x_2} + \cdots \right), \, \widetilde{\min}_k(x_1,x_2,...):=-\widetilde{\max}_k(-x_1,-x_2,...)$
where $k$ is a scaling factor for this approximation. \rbt{If $k\to \infty$, the operator $\widetilde{\max}=\max$ and similarly $\widetilde{\min}=\min$}. We use $k=500$ in our training. An ablation study on $k$ is in Table~\ref{tab:abl_nn}. 

Now the framework is differentiable for both the STL robustness score calculation and the system dynamics, we encode the objective in Eq.~\eqref{eq:optimization} using the loss function: 
\begin{equation}
    \mathcal{L}_{\text{STL}}=\frac{1}{|\mathcal{D}_0|}\sum\limits_{x\in \mathcal{D}_0}\max(0, \gamma-\tilde{\rho}(s_{\pi_{\theta}(x)},\phi))
    \label{eq:loss_stl}
\end{equation}
where a finite number of states $x$ are uniformly sampled from $\mathcal{X}_0$ to form the training set $\mathcal{D}_0$. \rbt{Aside from constraint satisfaction, the agent might also need to maximize some performance indices (e.g., ``reach the destination as fast as possible.")} We hence form the following loss function:
\begin{equation}
    \mathcal{L}_{Total}=\mathcal{L}_{Perf}+\lambda \mathcal{L}_{\text{STL}}
    \label{eq:loss_total}
\end{equation}
\rbt{where $\lambda>0$ weighs the performance objective $\mathcal{L}_{Perf}$ and the STL violations $\mathcal{L}_{STL}$. \rbt{We admit the  $\lambda \mathcal{L}_{\text{STL}}$ term cannot guarantee the learned policy always satisfy the STL requirement, but empirically we found out this can bring high STL satisfaction rate without much effort in hyperparameter tuning.} The $\mathcal{L}_{Perf}$ is often the Euclidean distance between state $x_t$ (in the trajectories starting from $x\sim\mathcal{D}_0$) and goal state $x^*$, i.e., $\mathcal{L}_{Perf}=\frac{1}{|\mathcal{D}_0 |(T+1)}\sum\limits_{x\sim \mathcal{D}_0}|x_t-x^*|$.}

\subsection{MPC-based Deployment with a Backup Policy}
In testing, we follow an online MPC manner. At each time step $t$, the controller $\pi_{\theta}$ receives the state $x_t$ and predicts a sequence of commands $u_{0:T-1}$, then we choose the first command $u_0$ for the agent to execute. %This process repeats until the task is finished or exceeds the time horizon.
Ideally, this policy will satisfy the STL constraints. 
However, this might not hold in testing due to: (1) imperfect training and (2) out-of-distribution scenario. Thus, we propose a backup policy. We monitor whether the predicted trajectory satisfies the STL specification. If a violation occurs, we sample $M$ trajectories in length $T_0+1$ with $T_0<T$. For the i-th trajectory $\tilde{\xi}_i=\{x_0^i, x_1^i,...x_{T_0}^i\}$, we evaluate our controller at the final state $x_{T_0}^i$ and rollout a trajectory $\hat{\xi}_i=\{x_{T_0+1}^i,x_{T_0+2}^i...,x_{T}^i\}$ (we only keep the first $T-T_0$ timesteps). Since  Neural Networks allow batch operations, all the sampled trajectories $\{\tilde{\xi}_i\}_{i=1}^M$ can be efficiently processed in one forward step to get $\{\hat{\xi}_i\}_{i=1}^M$. Finally, we select the trajectory $\xi_i=(\tilde{\xi}_i,\hat{\xi}_i)=\{x_0^i, ..., x_{T_0}^i, x_{T_0+1}^i, ...x_T^i\}$ with the highest robustness score and pick its first action to execute. 

If this trajectory still cannot satisfy the STL specification, we choose a trajectory that only satisfies the safety condition:
\begin{equation}
\mathop{\text{argmax}}\limits_i \rho(\xi_i,t,\phi), \text{ s.t. } \xi_i, t |= \phi_\mathtt{safe}
\label{eq:backup-2}
\end{equation} 
where $\phi_\mathtt{safe}$ contains all the safety constraints in $\phi$. We start with $T_0=1$, obtain $\xi_i$ using the above method, and gradually increase $T_0$ until a solution for Eq.~\eqref{eq:backup-2} is found. If there is still no feasible solution after $T_0=T$, we choose $\xi_i$ with the longest sub-trajectory starting from $x_0^i$ that satisfies $\phi_\mathtt{safe}$ and pick the first action to execute. This ensures at least $\phi_\mathtt{safe}$ are satisfied and the agent can recover to satisfy $\phi$ in the earliest time. %With a large sampling size of $M$, the backup policy has a large probability of guaranteeing STL/safety. In experiments, we discretize the action space to $L$ bins, and hence the sampling size is $M=L^{T_0}$.
we discretize the action space to $L$ bins, and hence the sampling size is $M=L^{T_0}$. We prove the backup policy has a probabilistic guarantee to find a feasible solution.

% In some extreme cases, even   In the first step, we still recover the STL, by first sampling and then recovering. In the second step, if the safety is violated, we propose another policy that can guarantee at least the safety constraint.  As a step back, we try to achieve only the safety conditions, using a brute-force search method, which will add up to time complexity but ensure the safety.

\begin{theorem}
%Denote the solution space $\mathcal{U}=\prod\limits_{i=1}^{T_0} \prod\limits_{j=1}^m [u^{min}_j, u^{max}_j]$. 
\rbt{Assume that a policy $u^*$ exists} with a $\delta$-radius neighborhood satisfying constraints $\phi$, i.e., 
%\exists u^*_{0:T_0-1} \text{ s.t. } \max\limits_{t,i}|\tilde{u}_{t,i}-u^*_{t,i}|_1 \leq \delta \implies \tilde{u} \models\phi$, 
$u \models\phi,\, \forall u \in \{u|\max\limits_{t,i}|u_{t,i}-u^*_{t,i}|_1 \leq \delta \}$, and each step's policy $u^*_t$ is uniformly distributed in $\prod\limits_k[u^\mathtt{min}_i, u^\mathtt{max}_i]$.  The probability our algorithm finds a solution is: $\min\{1, (\frac{((2L-4)\delta)^{mT}}{\prod\limits_{i=1}^m (u^\mathtt{max}_i-u^\mathtt{min}_i)^{T}}\}$.
\end{theorem}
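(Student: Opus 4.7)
The strategy is to show that (i) when the backup search reaches $T_0 = T$, the $L$-bin discretization of each action axis produces a grid that, with the stated probability, contains some action sequence inside the $\delta$-neighborhood of $u^*$, and (ii) by hypothesis any such sequence satisfies $\phi$, so the algorithm returns a feasible solution.

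\emph{Step 1 (reduction to coverage).} By hypothesis, every $u$ with $\max_{t,i}|u_{t,i}-u^*_{t,i}|_1 \le \delta$ satisfies $\phi$. It therefore suffices to lower-bound the probability that at least one discretized candidate $\hat u$ lies in this $\ell_\infty$ ball of radius $\delta$ around $u^*$. Since the grid is a product of per-axis discretizations, such a $\hat u$ exists iff for every $t\in\{0,\dots,T-1\}$ and every $i\in\{1,\dots,m\}$ at least one of the $L$ discretized values $\{v^{(i)}_1,\dots,v^{(i)}_L\}$ of axis $i$ lies within $\delta$ of $u^*_{t,i}$; denote this event $A_{t,i}$.

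\emph{Step 2 (single-axis coverage bound).} The ``good set'' along axis $i$ is $G_i=\bigcup_{k=1}^L[v^{(i)}_k-\delta,v^{(i)}_k+\delta]\cap[u^\mathtt{min}_i,u^\mathtt{max}_i]$, a union of intervals. Dropping the two boundary grid points (whose $[\cdot-\delta,\cdot+\delta]$ intervals can be clipped by the domain) and counting only the $L-2$ interior grid points, each contributing a full $2\delta$-interval that sits inside $[u^\mathtt{min}_i,u^\mathtt{max}_i]$, gives the uniform lower bound $|G_i|\ge(2L-4)\delta$. Since $u^*_{t,i}$ is uniform on $[u^\mathtt{min}_i,u^\mathtt{max}_i]$,
\[
\Pr[A_{t,i}]\;\ge\;\min\!\Big\{1,\;\tfrac{(2L-4)\delta}{u^\mathtt{max}_i-u^\mathtt{min}_i}\Big\}.
\]

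\emph{Step 3 (independence and product).} Because $u^*_t$ is uniform on the product rectangle and independent across $t$, the coordinates $\{u^*_{t,i}\}_{t,i}$ are jointly independent, so the events $\{A_{t,i}\}_{t,i}$ are jointly independent. Multiplying and using Step~2,
\[
\Pr\!\Big[\bigcap_{t=1}^{T}\bigcap_{i=1}^{m} A_{t,i}\Big]\;\ge\;\prod_{t=1}^{T}\prod_{i=1}^{m}\frac{(2L-4)\delta}{u^\mathtt{max}_i-u^\mathtt{min}_i}\;=\;\frac{((2L-4)\delta)^{mT}}{\prod_{i=1}^{m}(u^\mathtt{max}_i-u^\mathtt{min}_i)^{T}},
\]
and capping the bound at $1$ yields the expression in the theorem.

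\emph{Main obstacle.} The delicate part is Step~2: the constant $2L-4$ (rather than the sharper $2L$ or $2L-2$) comes from picking a bound that is uniform over where $u^*_{t,i}$ sits and that avoids case analysis on whether $\delta$ exceeds half the bin width or whether $u^*_{t,i}$ is near an endpoint. A smaller concern is justifying that the iterative $T_0$-schedule in the backup policy does not degrade the guarantee: by $T_0=T$ the procedure considers the full discretization grid, and earlier iterations only prune candidates, so the success probability of the full backup is at least that of the exhaustive enumeration analyzed above.
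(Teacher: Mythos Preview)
Your proof is correct and follows essentially the same approach as the paper: both reduce the question to whether the $\delta$-expansion of the grid covers $u^*$, discard the two boundary bins per axis to get the conservative $(L-2)\cdot 2\delta$ coverage, and then multiply over the $m$ action dimensions and $T$ time steps using the uniform-distribution assumption. The only cosmetic difference is that the paper phrases Step~2 at the per-timestep level (volume of a union of $m$-dimensional hypercubes $\ge (L-2)^m(2\delta)^m$), whereas you factor it one axis at a time; since the grid is a product set these are the same computation.
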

\begin{proof}
Our backup controller samples from a grid in the policy space at each time step. The probability that a  solution can be found is equal to that the $\delta$-region contains a grid point at each time step, which is greater than or equal to the probability that the union of the $\delta$-hypercube centered at all grids covers the policy $u^*$.  The volume of the policy space at each time step is $\prod\limits_{i=1}^m (u^\mathtt{max}_{i}-u^\mathtt{min}_{i})$. Each hypercube has a side length $2\delta$. Thus the volume of the union of the hypercubes is greater than (as we omit the cubes that are at the boundary of the policy space) $(L-2)^m (2\delta)^m$. Thus the probability of the union covering $u^*_t$ is $\min\{1, \frac{((2L-4)\delta)^{m}}{\prod\limits_{i=1}^m (u^\mathtt{max}_i-u^\mathtt{min}_i)}\}$, and for $T$ steps, the probability is powered by $T$ to derive the expected result shown in Theorem 1.
\end{proof}

\rbt{\subsection{Remarks on the STL constraints}}
\rbt{To handle different configurations, we augment the system states with additional parameters such as obstacle radius, locations and time constraints. These parameters stay constant unless encountering a reset. The policy learned from this augmentation can solve a family of STL formulas and can adapt to unseen configurations without further fine-tuning.}

\rbt{A wide range of requirements commonly used in robot tasks can be represented by STL, owing to the flexible form of atomic propositions (AP). Denote the 2D location of a robot as $x_\mathtt{[0:2]}\in \mathbb{R}^2$. We use $\mu(x)=r-||x_\mathtt{[0:2]}-x_\mathtt{obs}||_2$ to check collision with a round obstacle at $x_\mathtt{obs}\in \mathbb{R}^2$ with radius $r\in\mathbb{R}$, where $||\cdot||_2$ represents the Euclidean norm. For a polygon region $S=\{y:Ay\leq b\}$ with $A\in\mathbb{R}^{k\times 2}$ and $b\in\mathbb{R}^k$, $\mu(x)=b-Ax_\mathtt{[0:2]}$ can check whether the robot is in $S$. We use $\mu(x)=-(x_\mathtt{[0]}-x_\mathtt{min})(x_\mathtt{[0]}-x_\mathtt{max})$ to check whether the agent's $x$-coordinate is in the interval $[x_\mathtt{min}, x_\mathtt{max}]$. Furthermore, we can evaluate whether a system mode or a scenario has been activated by checking an indicator $I_1\in\{0,1\}$ via AP: $\mu(I_1)=I_1-0.5$. }

\rbt{However, it is difficult to cope with constraints with a global time interval. Consider $\phi=\square_{[5,10]}\text{Stay}(A)$ which means \textit{``Always stay at A in the time interval [5,10]"}. One step later, the agent will still try to satisfy $\square_{[5,10]}\text{Stay}(A)$, which actually should be updated to $\square_{[4,9]}\text{Stay}(A)$. Thus we need our policy to be aware of the STL time interval update process during training. To tackle this, we augment the system state with the timer variables and transform the original STL formula to extra STL constraints on those timer variables. To be more specific, the timer variables follow the dynamic: $\tau_{t+1} = \tau_{t} + \Delta t$ and the extra STL constraint needed to satisfy is $\square_{[0,T]} \left((5\leq\tau\leq 10)\to \text{Stay}(A)\right)$. We provide more examples and details in the experiment section \ref{sec:exp}.
}

\section{Experiments}
\label{sec:exp}

We conduct experiments with diverse dynamics and task specifications. Our method's training time is only 0.5X the time needed for training RL. In testing, our method is on par with the best baseline on simple benchmarks and achieves the highest STL accuracy on more complicated cases. As for runtime, our approach (without backup policy) is 10X-100X faster than classic planning methods such as MPC and MILP.

\subsection{Experiment setups}

\partitle{Baselines} We compare with RL, \rbt{model-based RL (MBRL),} and classical approaches. For RL, we train Soft Actor Critic~\cite{haarnoja2018soft}~\cite{raffin2021stable} under \rbt{five} random seeds with varied rewards. \textbf{RL\textsubscript{R}}: uses a hand-crafted reward. \textbf{RL\textsubscript{S}}: uses STL robustness score as the reward. \textbf{RL\textsubscript{A}}: uses STL accuracy as the reward. \rbt{The MBRL baselines are \textbf{MBPO}:~\cite{janner2019trust} with STL accuracy as the reward, 
\textbf{PETS}:~\cite{chua2018deep} with a hand-crafted reward, and \textbf{CEM}: Cross Entropy Method~\cite{de2005tutorial} with STL robustness reward.} \rbt{The rests are} \textbf{MPC}: Model predictive control via Casadi~\cite{andersson2019casadi} for nonlinear systems and Gurobi\rbt{~\cite{gurobi2021gurobi}} for linear dynamics, \textbf{STL\textsubscript{M}}: An official implementation for STL-MiLP~\cite{sun2022multi} with a PD control for nonlinear dynamics if needed, and \textbf{STL\textsubscript{G}}: A gradient-based method similar to~\cite{dawson2022robust}. 

\partitle{Implementation details} For our method, we set $\gamma=0.5$, \rbt{$k=500$}, $T\in[10,25]$ and $\Delta t\in[0.1\mathtt{s}, 0.2\mathtt{s}]$. Our controller is a three-layer MLP with 256 hidden units in each layer. We uniformly sample 50000 points and train for 50k steps (250k for navigation). We update the controller 
% and use the NN controller to rollout trajectories for $10\sim25$ timesteps with $\Delta t\in[0.1s,0.2s]$. 
via the Adam optimizer~\cite{kingma2014adam} with a learning rate $3\times 10^{-4}$ for most tasks. Training in PyTorch~\cite{paszke2019pytorch} takes 2-12 hours on a V100 GPU.

\partitle{Metrics} \rbt{In testing we evaluate the average STL accuracy (the ratio of the short segments starting at each step satisfies the STL)} and the computation time. For RL baselines (\textbf{RL\textsubscript{R}}, \textbf{RL\textsubscript{S}}, \textbf{RL\textsubscript{A}}), we also compare the STL accuracy in training.

% \begin{figure*}[htbp]
% \centering
%     \begin{subfigure}[t]{0.155\textwidth}
%         \includegraphics[width=\textwidth]{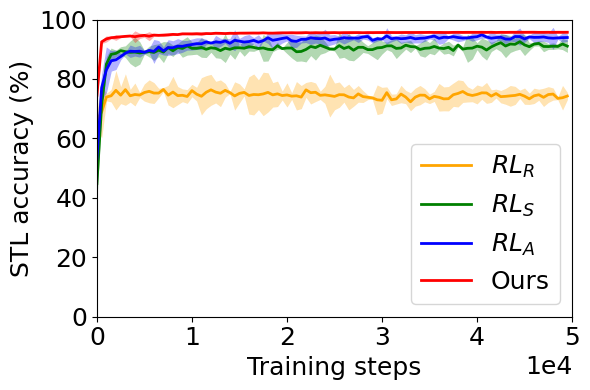}
%         \caption{Traffic}
%     \end{subfigure}
%     \begin{subfigure}[t]{0.155\textwidth}
%         \includegraphics[width=\textwidth]{figs/rewards_Exp2-Maze game.png}
%         \caption{Reach-n-avoid}
%     \end{subfigure}
%     \begin{subfigure}[t]{0.155\textwidth}
%         \includegraphics[width=\textwidth]{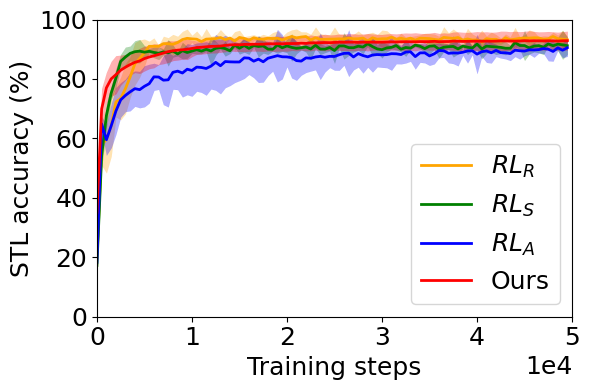}
%         \caption{Ship-safe}
%     \end{subfigure}
%     \begin{subfigure}[t]{0.155\textwidth}
%         \includegraphics[width=\textwidth]{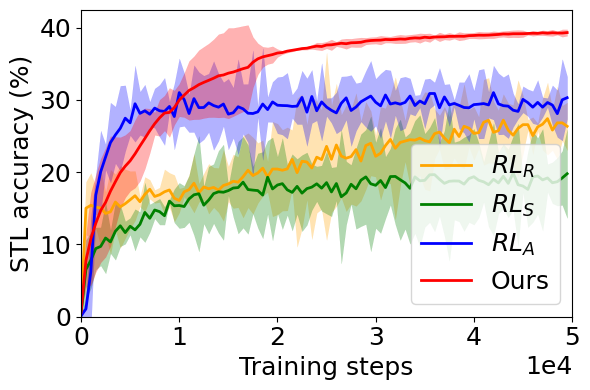}
%         \caption{Ship-track}
%     \end{subfigure}
%     \begin{subfigure}[t]{0.155\textwidth}
%         \includegraphics[width=\textwidth]{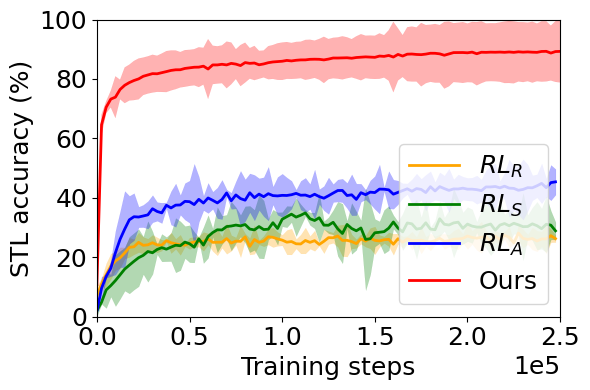}
%         \caption{Robot}
%     \end{subfigure}
%     \begin{subfigure}[t]{0.155\textwidth}
%         \includegraphics[width=\textwidth]{example-image-c}
%         \caption{Arm}
%     \end{subfigure}
%     \caption{STL accuracy during training}
%     \label{fig:training-curve}
% \end{figure*}

\begin{figure*}[htbp]
\centering
    \includegraphics[width=\textwidth]{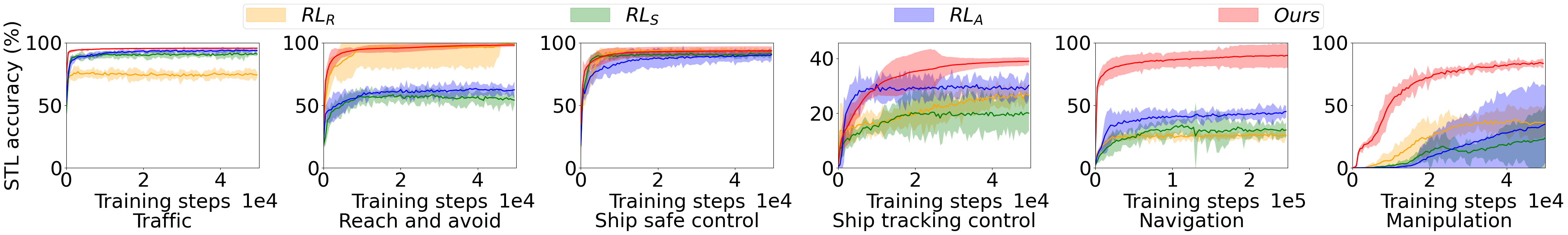}
    \caption{STL accuracy during training}
    \label{fig:training-curve}
\end{figure*}

% \begin{figure*}[htbp]
% \centering
%     \begin{subfigure}[t]{0.155\textwidth}
%         \includegraphics[width=\textwidth]{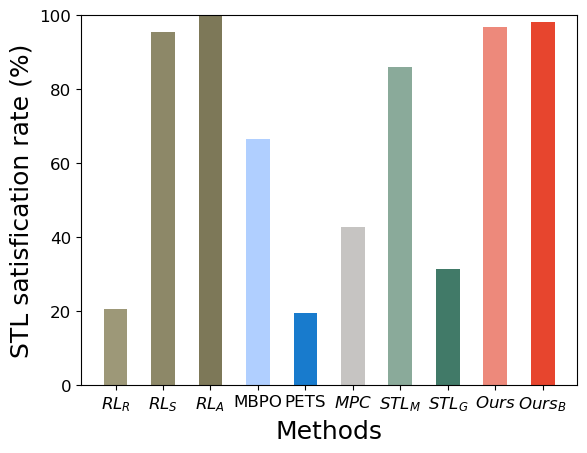}
%         \caption{Traffic}
%     \end{subfigure}
%     \begin{subfigure}[t]{0.155\textwidth}
%         \includegraphics[width=\textwidth]{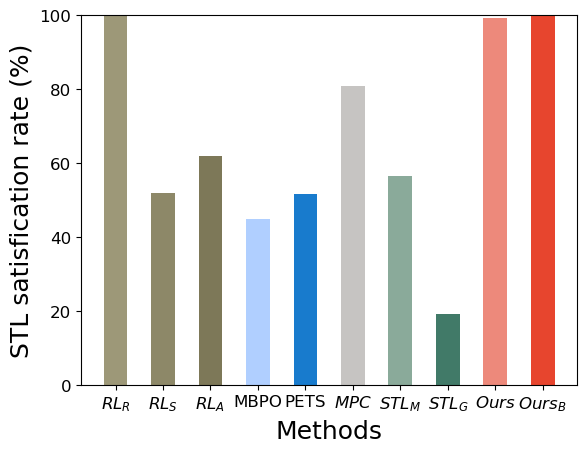}
%         \caption{Reach-n-avoid}
%     \end{subfigure}
%     \begin{subfigure}[t]{0.155\textwidth}
%         \includegraphics[width=\textwidth]{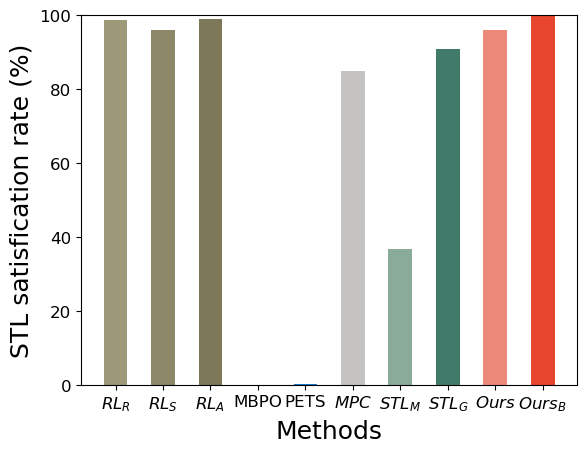}
%         \caption{Ship-safe}
%     \end{subfigure}
%     \begin{subfigure}[t]{0.155\textwidth}
%         \includegraphics[width=\textwidth]{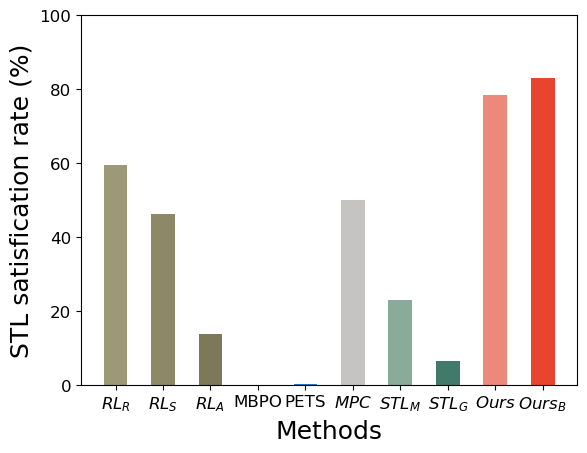}
%         \caption{Ship-track}
%     \end{subfigure}
%     \begin{subfigure}[t]{0.155\textwidth}
%         \includegraphics[width=\textwidth]{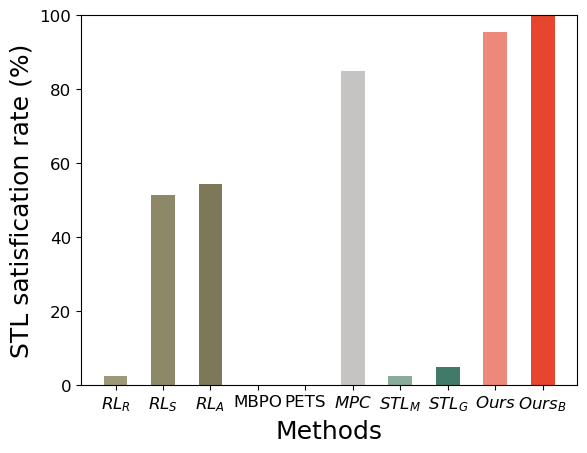}
%         \caption{Robot}
%     \end{subfigure}
%     \begin{subfigure}[t]{0.155\textwidth}
%         \includegraphics[width=\textwidth]{example-image-b}
%         \caption{Arm}
%     \end{subfigure}
%     \caption{STL accuracy at test phase}
%     \label{fig:testing-acc}
% \end{figure*}

\begin{figure*}[htbp]
\centering
\includegraphics[width=\textwidth]{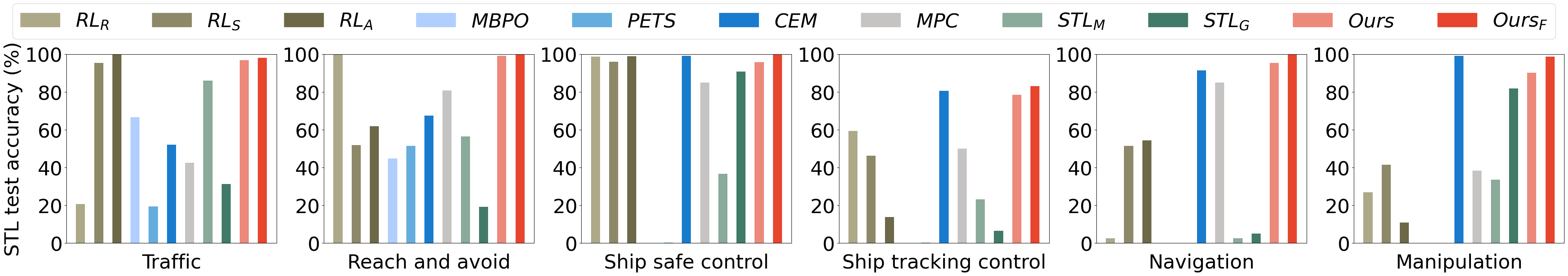}
\caption{STL accuracy at test phase}
\label{fig:testing-acc}
\end{figure*}

% \begin{figure*}[htbp]
% \centering
%     \begin{subfigure}[t]{0.155\textwidth}
%         \includegraphics[width=\textwidth]{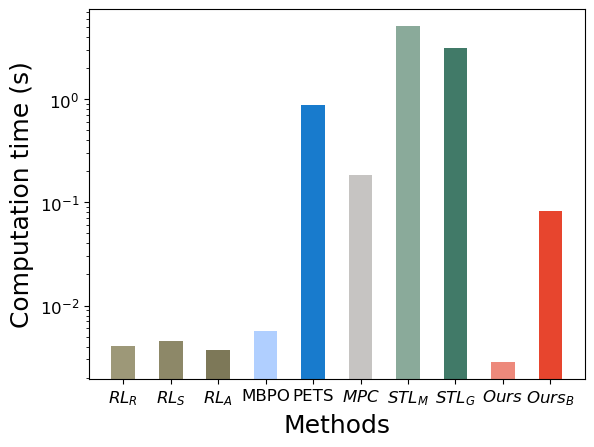}
%         \caption{Traffic}
%     \end{subfigure}
%     \begin{subfigure}[t]{0.155\textwidth}
%         \includegraphics[width=\textwidth]{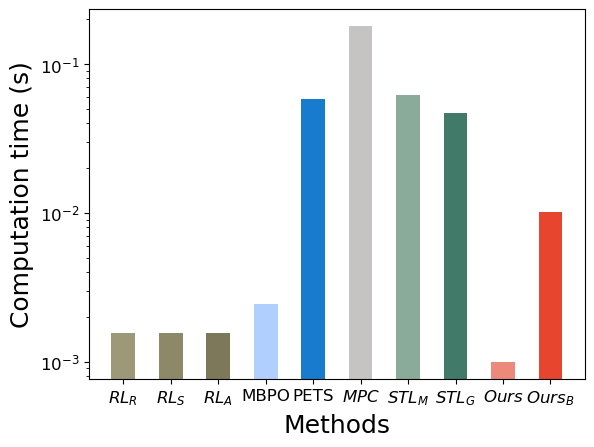}
%         \caption{Reach-n-avoid}
%     \end{subfigure}
%     \begin{subfigure}[t]{0.155\textwidth}
%         \includegraphics[width=\textwidth]{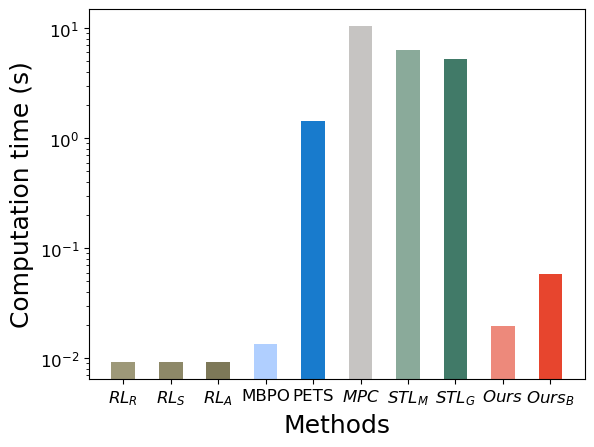}
%         \caption{Ship-safe}
%     \end{subfigure}
%     \begin{subfigure}[t]{0.155\textwidth}
%         \includegraphics[width=\textwidth]{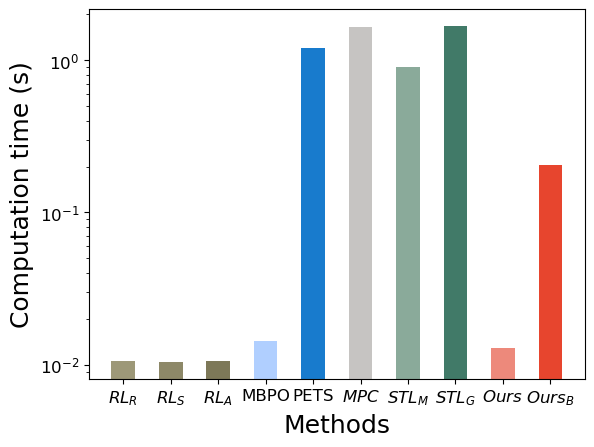}
%         \caption{Ship-track}
%     \end{subfigure}
%     \begin{subfigure}[t]{0.155\textwidth}
%         \includegraphics[width=\textwidth]{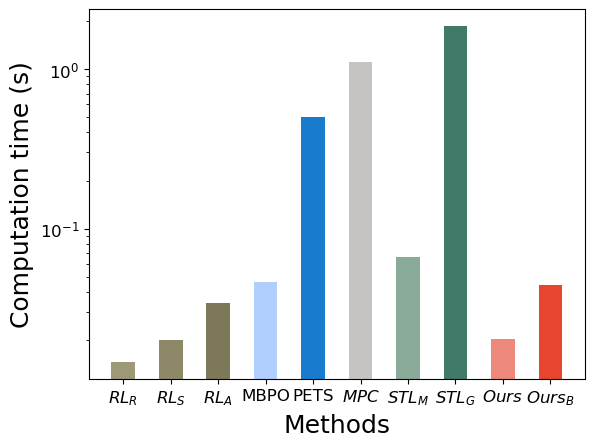}
%         \caption{Robot}
%     \end{subfigure}
%     \begin{subfigure}[t]{0.155\textwidth}
%         \includegraphics[width=\textwidth]{example-image-a}
%         \caption{Arm}
%     \end{subfigure}
%     \caption{Computation time at test phase}
%     \label{fig:testing-time}
% \end{figure*}
\begin{figure*}[htbp]
\centering
\includegraphics[width=\textwidth]{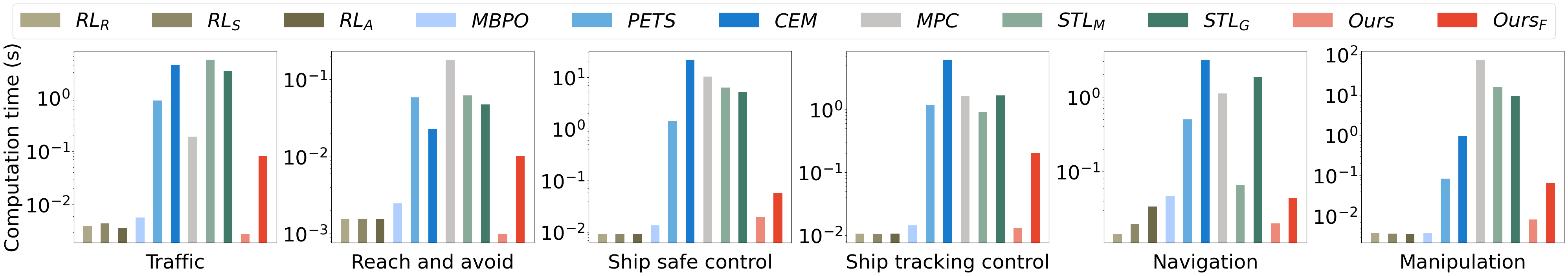}
\caption{Computation time at test phase}
\label{fig:testing-time}
\end{figure*}

% \begin{figure*}[h]
% \centering
%     \begin{subfigure}[t]{0.19\textwidth}
%         \includegraphics[width=\textwidth]{figs/e3_ship_safe_safety.png}
%         \caption{Ship-safe}
%     \end{subfigure}
%     \begin{subfigure}[t]{0.19\textwidth}
%         \includegraphics[width=\textwidth]{figs/e4_ship_track_safety.png}
%         \caption{Ship-track, Safety}
%     \end{subfigure}
%     \begin{subfigure}[t]{0.19\textwidth}
%         \includegraphics[width=\textwidth]{figs/e5_rover_safety.png}
%         \caption{Mars-Rover, Safety}
%     \end{subfigure}
%     \begin{subfigure}[t]{0.19\textwidth}
%         \includegraphics[width=\textwidth]{figs/e5_rover_battery.png}
%         \caption{Mars-Rover, Battery}
%     \end{subfigure}
%     \begin{subfigure}[t]{0.19\textwidth}
%         \includegraphics[width=\textwidth]{figs/e5_rover_goals.png}
%         \caption{Mars-Rover, Goals}
%     \end{subfigure}
    
%     \caption{Other metrics}
% \end{figure*}

% \begin{figure*}[h]
% \centering
%     \begin{subfigure}[t]{0.75\textwidth}
%         \includegraphics[width=\textwidth]{figs/rewards.png}
%         \caption{}
%     \end{subfigure}
%     \caption{(TODO) Reward curves}
% \end{figure*}

% \begin{figure*}[h]
% \centering
%     \begin{subfigure}[t]{0.75\textwidth}
%         \includegraphics[width=\textwidth]{figs/impl_status.png}
%         \caption{Current status}
%     \end{subfigure}
%     \caption{Current status}
% \end{figure*}

\subsection{Benchmarks}
\subsubsection{Driving with traffic rules}
\rbt{We consider driving near intersections where routes and lateral control are provided. The state} $(x, v, I_\mathtt{light}, \tau, \Delta x, v_\mathtt{lead}, I_\mathtt{yield})^T$ is for ego car offset, velocity, light indicator ($0$ for stop sign and $1$ for traffic light), timer (stopped time or traffic light phase), leading vehicle distance, its speed, and yield signal ($1$ for yield and $0$ for not). The (partial) dynamics are:
% \begin{equation}
%     \dot{\begin{pmatrix} x \\ v \\ I_{light} \\ \tau \\ \Delta x \\ v_{lead} \\ I_{yield} \end{pmatrix}} = \begin{pmatrix}
%         v \\ u \\ 0 \\ (1-I_{light})\mathbbm{1}(\text{at stop sign})+I_{light} \\ v_{lead} - v \\ 0 \\ 0
%     \end{pmatrix}
% \end{equation}
% \begin{equation}
%     \dot{\begin{pmatrix} x \\ v \\ I_\mathtt{light} \\ \tau \\ \Delta x \\ v_\mathtt{lead} \\ I_\mathtt{yield} \end{pmatrix}} = \begin{pmatrix}
%         v \\ u \\ 0 \\ (1-I_\mathtt{light})\mathbbm{1}(\text{at stop sign})+I_\mathtt{light} \\ v_\mathtt{lead} - v \\ 0 \\ 0
%     \end{pmatrix}
% \end{equation}
% \begin{equation}
%     \begin{aligned}
%         \dot{x}=v, \quad\dot{v}=u, \quad \dot{(\Delta x)} = v_\mathtt{lead}-v\\
%         \dot{\tau}=(1-I_\mathtt{light})\mathds{1}(\text{at stop sign})+I_\mathtt{light}
%     \end{aligned}
% \end{equation}
$      \dot{x}=v, \,\dot{v}=u, \, \dot{(\Delta x)} = v_\mathtt{lead}-v,\,
        \dot{\tau}=(1-I_\mathtt{light})\mathds{1}(\text{at stop sign})+I_\mathtt{light}$
where $\mathds{1}(\text{at stop sign})=\mathds{1}\left(x(x+1)\leq 0\right)$ as $x=0$ means being at the intersection, and $u$ is the control. $x, I_\mathtt{light}, \tau$ will reset at a new intersection, $\Delta x, v_\mathtt{lead}$ will reset when the leading car changes, and $I_\mathtt{yield}$ will change when the external yield command is emitted. The rules are: (1) never collide with the leading car, (2) stop by the stop sign for 1 second and then enter the intersection if no yield (3) \rbt{stop by the intersection if it is red light}. Thus,
% \begin{equation}
%     \Phi=(\neg I_{light}\implies \phi_1) \land (I_{light}\implies  \phi_2) \land \phi_3 
% \end{equation}
$
    \Phi=(\neg I_{\mathtt{light}}\implies \phi_1) \land (I_{\mathtt{light}}\implies  \phi_2) \land \phi_3 
$
where ($T_\mathtt{total}=T_r+T_g$):
\begin{equation}
    \begin{aligned}
    & \phi_1 = \lozenge_{[0,T]}(\tau>1) \land \left(I_\mathtt{yield}\implies \square_{[0,T]}(x<0)\right)\\
    & \phi_2 = \square_{[0,T]}(\tau\%(T_\mathtt{total})>T_{r} \lor x(x-x_\mathtt{inter})>0) \\
    & \phi_3 = \square_{[0,T]}(\Delta x>0)\\
    \end{aligned}
\end{equation}
where $T_{r}$ and $T_{g}$ are red and green light phase time, \% is modulo operator and $x_\mathtt{inter}$ is the intersection width. We train $\pi_\theta$ under each sampled scenario (traffic light, stop sign, yield, leading vehicle) at one intersection. In testing, we conduct multi-agent planning with 20 cars and 20 junctions. 

\subsubsection{Reach-n-avoid game}
\rbt{An agent navigates to reach goals and avoid obstacles in a 2D maze shown in Fig.~\ref{fig:screenshot}(b). It can see up to two levels.} The agent state is $(x, v, \Delta y, x_0, l_0, g_0, x_1, l_1, g_1)^T$ where $x$ and $v$ denote the agent's horizontal position and velocity, $\Delta y$ is the vertical distance to the nearest level above. Here $x_0$ is the obstacle's leftmost horizontal position, $l_0$ is the obstacle width, and $g_0$ is the goal location relative to the obstacle (-1 if no goal). $x_1, l_1, g_1$ are for the second level above. \rbt{The agent dynamics are}:
$\dot{x}=v, \dot{v}=a, \dot{(\Delta y)}=c$
where $a$ is the control and $c$ is a constant. Here $\Delta y, \Delta x_0, l_0, g_0, \Delta x_1, l_1, g_1$ will reset once the agent passes a level. The STL is %\Phi=\phi_1 \land \phi_2 \land \phi_3 \land \phi_4$, 
% \begin{equation}
%     \Phi=\phi_1 \land \phi_2 \land \phi_3 \land \phi_4 
% \end{equation}
$\Phi=\phi_1 \land \phi_2 \land \phi_3 \land \phi_4$:
% \begin{equation}
%     \begin{aligned}
%         & \phi_1 = g_0>0 \implies \lozenge_{[0,T]} \left((x-l_0)^2+\Delta y^2<r^2\right) \\
%         & \phi_2 = \square_{[0,T]} (\Delta y(\Delta y-h)<0 
%         \\
%         & \quad\quad \quad \quad\implies (x- x_0)(x-\ x_0-l_0)>0) \\
%         & \phi_3 = g_1>0 \implies \lozenge_{[0,T]} \left((x-l_1)^2+(\Delta y-d)^2<r^2\right) \\
%         & \phi_4 = \square_{[0,T]} ((\Delta y-d)(\Delta y-h-d)<0 \\
%         & \quad\quad \quad \quad \implies (x-x_1)(x-x_1-l_1)>0) \\
%     \end{aligned}
% \end{equation}
\begin{equation}
    \begin{aligned}
        & \phi_1 = g_0>0 \implies \lozenge_{[0,T]} \left((x-l_0)^2+\Delta y^2<r^2\right) \\
        & \phi_2 = \square_{[0,T]} (\Delta y(\Delta y-h)<0 
        \implies \Delta x_0(\Delta x_0-l_0)>0) \\
        & \phi_3 = g_1>0 \implies \lozenge_{[0,T]} \left((x-l_1)^2+(\Delta y-d)^2<r^2\right) \\
        & \phi_4 = \square_{[0,T]} (\Delta y_1(\Delta y_1-h)<0 
        \implies \Delta x_1(\Delta x_1-l_1)>0) \\
    \end{aligned}
\end{equation}
where $r$ is the goal radius, $h$ is the obstacle's height, $\Delta x_i=x-x_i$, $\Delta y_1=\Delta y-d$ and $d$ is the gap between two levels. In testing, we control for 500 time steps for evaluation. 

\subsubsection{Ship collision avoidance}
We control a ship (modeled in~\cite{fossen2000survey}(Sec. 4.2)) to avoid obstacles with varied radii. 
%The challenge is that the ship has a large inertia, so it needs to plan ahead when it senses the obstacle.
% x, y, phi, u, v, r, dx1, dy1, r1, dx2, dy2, r2, dx3, dy3, r3
The 12-dim system state has $x,y,\psi,u,v,r$ to describe the pose and $ x_1, y_1, r_1$ ($x_2, y_2, r_2$) to denote the (second) closest obstacle with radius $r_1$ ($r_2$) and relative position $x_1, y_1$ ($x_2, y_2$). The controls are thrust $T$ and rudder angle $\delta$. The dynamics \rbt{are}:
% \begin{equation}
%     \dot{\begin{pmatrix}
%         x\\
%         y\\
%         \psi\\
%         u\\
%         v\\
%         r\\
%         x_1\\
%         y_1\\
%         r_1\\
%         x_2\\
%         y_2\\
%         r_2\\
%     \end{pmatrix}}=\begin{pmatrix}
%         u\cos\psi - v\sin\psi\\
%         u\sin\psi + v\cos\psi\\
%         r\\
%         T\\
%         0.01\delta\\
%         0.5\delta\\
%         0 \\ 0 \\ 0 \\ 0 \\ 0 \\ 0
%     \end{pmatrix}
% \end{equation}
% \begin{equation}
%     \dot{\begin{pmatrix}
%         x\\
%         y\\
%         \psi\\
%         u\\
%         v\\
%         r\\
%     \end{pmatrix}}=\begin{pmatrix}
%         u\cos\psi - v\sin\psi\\
%         u\sin\psi + v\cos\psi\\
%         r\\
%         T\\
%         0.01\delta\\
%         0.5\delta\\
%     \end{pmatrix}
% \end{equation}
% \begin{equation}
% \begin{aligned}
%     \dot{x}=u\cos\psi - v\sin\psi,\quad\dot{y}=u\sin\psi + v\cos\psi\\
%     \dot{\psi}=r,\quad\dot{u}=T,\quad\dot{v}=0.01\delta,\quad\dot{r}=0.5\delta
% \end{aligned}
% \label{eq:ship}
%     % \dot{\begin{pmatrix}
%     %     x\\
%     %     y\\
%     %     \psi\\
%     %     u\\
%     %     v\\
%     %     r\\
%     % \end{pmatrix}}=\begin{pmatrix}
%     %     u\cos\psi - v\sin\psi\\
%     %     u\sin\psi + v\cos\psi\\
%     %     r\\
%     %     T\\
%     %     0.01\delta\\
%     %     0.5\delta\\
%     % \end{pmatrix}
% \end{equation}
$\dot{x}=u\cos\psi - v\sin\psi,\,\dot{y}=u\sin\psi + v\cos\psi, \, \dot{\psi}=r,\,\dot{u}=T,\,\dot{v}=0.01\delta,\,\dot{r}=0.5\delta$.
The rules are: (1) always be in the river (2) always avoid obstacles. The STL is
$
% \begin{equation}
    \Phi=\phi_1 \land \phi_2 \land \phi_3
% \end{equation}
$:
\begin{equation}
    \begin{aligned}
        &\phi_1 = G_{[0,T]}(|y|<D/2)\\
        &\phi_2 = G_{[0,T]}((x-x_1)^2+(y-y_1)^2\leq r_1^2\\
        &\phi_3 = G_{[0,T]}((x-x_2)^2+(y-y_2)^2\leq r_2^2\\
    \end{aligned}
\end{equation}
where $D$ is the width of the river. In testing, we roll out 20 trajectories for 200 steps to evaluate the performance.

\subsubsection{Ship safe centerline tracking}
\rbt{Besides collision avoidance}, the ship must also not deviate more than c time units from the centerline. The 10-dim state now has $x,y,\psi,u,v,r$ to denote the ship state, $x_1,y_1,r_1$ for the closest front obstacle, and $\tau$ for the remaining time the ship can deviate, with
% \begin{equation}
%     \dot{\begin{pmatrix}
%         x\\
%         y\\
%         \psi\\
%         u\\
%         v\\
%         r\\
%         x_1\\
%         y_1\\
%         r_1\\
%         \tau
%     \end{pmatrix}}=\begin{pmatrix}
%         u\cos\psi - v\sin\psi\\
%         u\sin\psi + v\cos\psi\\
%         r\\
%         T\\
%         0.01\delta\\
%         0.5\delta\\
%         0 \\ 0 \\ 0 \\ -\mathbbm{1}(|y|>\gamma)
%     \end{pmatrix}
% \end{equation}
% \begin{equation}
%         \dot{\tau}=-\mathbbm{1}(|y|>\gamma)
% \end{equation}
$\dot{\tau}=-\mathds{1}(|y|>\gamma)$
where $\gamma$ is the deviation threshold. $x_1,y_1,r_1$ and $\tau$ will get reset once the ship passes the current obstacle. The STL is:
$
% \begin{equation}
    \Phi=\phi_1 \land \phi_2 \land \phi_3
% \end{equation}
$
where,
\begin{equation}
    \begin{aligned}
        &\phi_1 = G_{[0,T]}(|y|<D/2)\\
        &\phi_2 = G_{[0,T]}((x-x_1)^2+(y-y_1)^2\leq r_1^2\\
        &\phi_3 = \left(\tau>0\right)U_{[0,T]}\left(G_{[0,T]}(|y|<\gamma)\right)
    \end{aligned}
\end{equation}
We rollout 20 trajectories for 200 steps for evaluation.

\subsubsection{Robot navigation}
\rbt{A battery-powered robot navigates to reach the destinations and charging stations. The state $(x, y, x_d, y_d, x_c, y_c, \tau_b, \tau_s)^T$ denotes the robot, the target, the charging station, the battery, and the remaining time at the charging station. The controls are speed $v$ and heading $\theta$. The dynamics are: $\dot{x}=v\cos\theta,\, \dot{y}=v\sin\theta\rbt{,} \,
    \tau_b=-1,\, \tau_s=-\mathds{1}\{\text{station}\}$} 
% \begin{equation}
%     \dot{\begin{pmatrix}
%         x \\ y \\ x_d \\ y_d \\ x_c \\ y_c \\ \tau_b \\ \tau_s
%     \end{pmatrix}} = \begin{pmatrix}
%         v\cos\theta \\ 
%         v\sin\theta \\
%         0 \\
%         0 \\
%         0 \\
%         0 \\
%         -1 \\
%         -\mathbbm{1}\{\text{at station}\}
%     \end{pmatrix}
% \end{equation}
% \begin{equation}
% \begin{aligned}
%     \dot{x}=v\cos\theta,\, \dot{y}=v\sin\theta\rbt{,} \,
%     % \tau_b=-1,\, \tau_s=-1\{\text{station}\}
%     \tau_b=-1,\, \tau_s=-\mathds{1}\{\text{station}\}
% \end{aligned}
    % \dot{\begin{pmatrix}
    %     x \\ y \\ x_d \\ y_d \\ x_c \\ y_c \\ \tau_b \\ \tau_s
    % \end{pmatrix}} = \begin{pmatrix}
    %     v\cos\theta \\ 
    %     v\sin\theta \\
    %     0 \\
    %     0 \\
    %     0 \\
    %     0 \\
    %     -1 \\
    %     -\mathbbm{1}\{\text{at station}\}
    % \end{pmatrix}
% \end{equation}
where its battery will reset: $\tau_b^+=T$ once it reaches the charger station and the remaining stay time will get reset $\tau_c^+=c$ once the robot leaves the charging station. The rules are: (1) always avoid obstacles (2) go to the target if high battery (3) if low battery, go to the charging station and stay for c time units and (4) keep the battery level non-negative. The STL is
$
% \begin{equation}
    \Phi=\phi_1 \land \phi_2 \land \phi_3 \land \phi_4 \land \phi_5
% \end{equation}
$:
\begin{equation}
    \begin{aligned}
        &\phi_1 = G_{[0,T]}(\neg \text{In(Obstacles)}) \quad \phi_4 = G_{[0,T]}(\tau_b>0) \\
        &\phi_2 = \tau_b>1 \implies F_{[0,T]}(\text{Near}(x_d,y_d))\\
        &\phi_3 = \tau_b<1 \implies F_{[0,T]}(\text{Near}(x_c,y_c))\\
        % &\phi_4 = G_{[0,T]}(\tau_b>0) \\
        &\phi_5 = \text{Near}(x_c,y_c) \Longrightarrow G_{[0, c]}(\text{Near}(x_c,y_c) \lor \tau_s<0)
    \end{aligned}
\end{equation}
\rbt{where $\text{In(Obstacles)}$ checks if the robot is in any obstacle}, and $\text{\text{Near}(x',y')}=(x-x')^2+(y-y')^2\leq r^2$.
In testing, we constructed a sequence of destinations and five charging stations. After the robot reaches one destination, the next one will show up. The robot can choose any station \rbt{for charging.}

% In this case, we can see that, our trained controller can successfully pick up the passengers and send them to their destinations, while guaranteeing collision-free and battery requirements.

\rbt{\subsubsection{Manipulation} A 7DoF Franka Emika robot aims to reach the goal without collisions or breaking the joints (We use PyBullet for visualization). The state $(q_0...q_6,x,y,z)^T$ denotes the joint angles and the goal location. The dynamics are $\dot{q}_i=u_i, i=0,...,6$ where $u_i$ controls the i-th joint. The obstacle is at $(0.3, 0.3, 0.5)$. The STL is $\Phi=\phi_1 \land \phi_2 \land ... \phi_9$:
\begin{equation}
    \begin{aligned}
        &\phi_1 = F_{[0,T]}((x_e-x)^2+(y_e-y)^2+(z_e-z)^2 <r_{g}^2) \\
        &\phi_2 = G_{[0,T]}((x_e-0.3)^2+(y_e-0.3)^2+(z_e-0.5)^2 >r_{o}^2) \\
        &\phi_{i+3} = G_{[0,T]}(q_i>q_i^{min} \ \land \ q_i<q_i^{max}),\, i=0,...,6\\
        % &\phi_4 = G_{[0,T]}(\tau_b>0) \\
        % &\phi_5 = \text{Near}(x_c,y_c) \implies G_{[0, c]}(\text{Near}(x_c,y_c) \lor \tau_s<0) \\
    \end{aligned}
\end{equation}
where $x_e,y_e,z_e$ is the end effector, $q_i^{min}, q_i^{max}$ are the joint limits and $r_g, r_o$ are the goal / obstacle radii. In evaluation, the arm is asked to reach a sequence of goals in 250 steps.
}

% In this case we compare with the CBF method. Notice that...

\subsection{Training and testing comparisons}
During training, we \rbt{compare} the STL accuracy of our method and the \rbt{model-free RL} baselines (\textbf{RL\textsubscript{R}}, \textbf{RL\textsubscript{S}} and \textbf{RL\textsubscript{A}}). As shown in Fig.~\ref{fig:training-curve}, our method \rbt{in most cases} reaches the highest STL accuracy. For tasks with simple dynamics (Traffic, Reach-n-avoid) or simple STL specifications (Ship-safe), the best  RL baselines can have similar STL accuracy to ours. However, no one RL baseline can consistently outperform the others. For tasks with moderate system complexity and complicated STL specification (Ship-track, navigation and manipulation), our approach will have $10\%\sim40\%$ gain in the STL accuracy. We speculate the gain here is because our approach leverages the system dynamics and the gradient information from the STL formula. This shows the advantage of our approach in policy learning compared to RL methods.

\rbt{In testing, we compare with all baselines and show the result with our backup policy (\textbf{Ours\textsubscript{F}}). As shown in Fig.~\ref{fig:testing-acc}, aside from \textbf{CEM}, \textbf{Ours} can outperform the best baselines by $20\%$ for the ship-track task, $45\%$ for the navigation task and $8\%$ for the manipulation task, and only $3\%$ lower than the best approaches on three tasks with simple dynamics or STL formulas. Compared to \textbf{CEM}, \textbf{Ours} has a slightly lower accuracy on ship safe control, tracking control, and navigation but a much higher accuracy for the rest three tasks.  With the backup policy, \textbf{Ours\textsubscript{F}} achieves the same accuracy as the best RL baselines on Reach-n-avoid and Ship-safe tasks and $1.7\%$ lower on the Traffic task and consistently outperforms \textbf{CEM}}. The high STL accuracy of \textbf{CEM} might be due to the short tasks horizon and low action dimension. \rbt{The inferior performance for \textbf{MPC}, \textbf{STL\textsubscript{M} and \textbf{STL\textsubscript{G}}} might be because the solvers encounter numerical issues and cannot converge or the optimizer stucks at local minimum. \textbf{MBPO} and \textbf{PETS} are worse, which might be because they need careful hyper-parameters tuning and learning the dynamics (for the augmented state space) is hard, especially for high-dimension tasks (e.g., navigation and manipulation)}. As for the computation time, as shown in Fig.~\ref{fig:testing-time}, \rbt{\textbf{Ours} is on par with RL and 10X-100X faster than classic MPC or STL solvers. While \textbf{Ours\textsubscript{F}} is slower due to the backup policy}, it is still 3X faster than \textbf{CEM} and other classical methods.

\begin{figure}[htbp]
\centering
    \begin{subfigure}[t]{0.235\textwidth}
        \includegraphics[width=\textwidth]{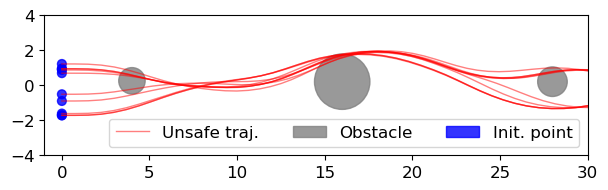}
        \caption{\textbf{Ours} simulation}
    \end{subfigure}
    \begin{subfigure}[t]{0.235\textwidth}
        \includegraphics[width=\textwidth]{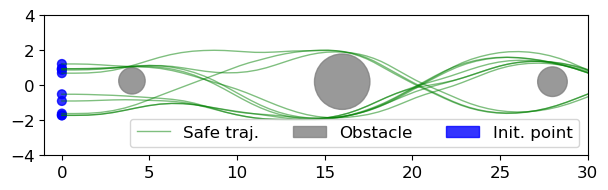}
        \caption{\textbf{Ours\rbt{\textsubscript{F}}} simulation}
    \end{subfigure}
    \\
    \begin{subfigure}[t]{0.235\textwidth}
        \includegraphics[width=\textwidth]{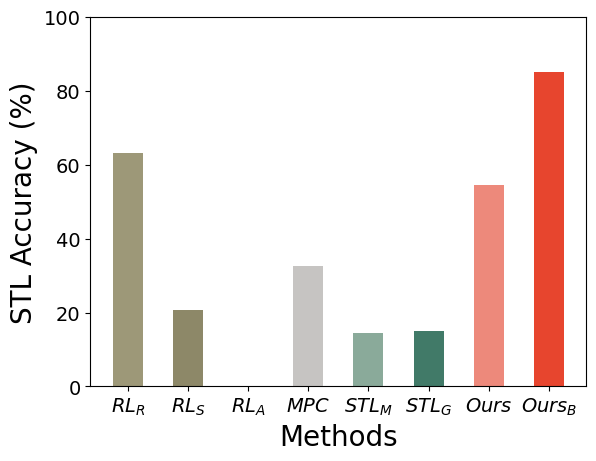}
        \caption{STL Accuracy Comparision}
    \end{subfigure}
    \begin{subfigure}[t]{0.235\textwidth}
        \includegraphics[width=\textwidth]{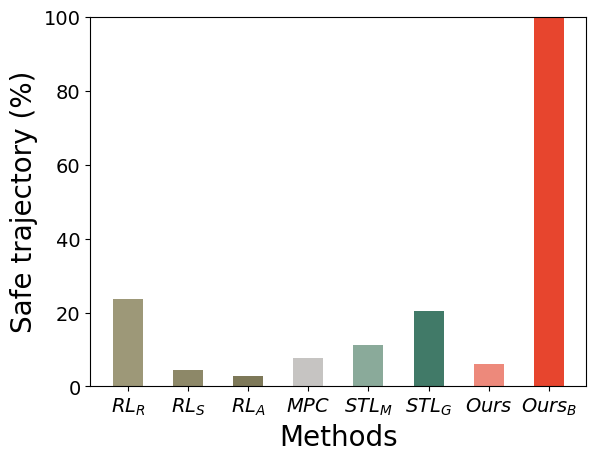}
        \caption{Safety Rate Comparision}
    \end{subfigure}
    \caption{Backup policy in testing}
    \label{fig:backup-policy}
\end{figure}

\subsection{Testing backup policy for out-of-distribution scenarios}
%The test result above all assume the data distribution is the same between training and testing. 
When the testing case is out of distribution (OOD), our proposed backup policy can at least maintain the agent's safety and improve the STL accuracy after recovering from the unseen distribution. \rbt{In the ship-track benchmark}, we shift the first obstacle vertically from the centerline and enlarge the second obstacle, \rbt{which makes the test case OOD (as in training, the obstacles are smaller and on the centerline)}. From Fig.~\ref{fig:backup-policy}(c)(d), we can see that without backup policy, \textbf{Ours} only achieves $6\%$ safety rate and $54\%$ STL accuracy (from Fig.~\ref{fig:backup-policy}(a) we can see that most of the agents will collide with the first obstacle due to OOD). With the backup policy, \rbt{\textbf{Ours\textsubscript{F}}} achieves $100\%$ safety and increases STL accuracy to $85\%$. \rbt{Other baselines are plotted} in Fig.~\ref{fig:backup-policy}(c)(d) for reference.

\setlength{\tabcolsep}{3pt}
\begin{table}[htbp]
\begin{subtable}{.48\linewidth}
\caption{Satisfication threshold $\gamma$}
\centering
% \resizebox{0.4\textwidth}{!}{%
\begin{tabular}{ccc}
\toprule
$\gamma$ & Train Acc. & Val. Acc. \\ 
\midrule
0.0 & 0.855 & 0.788\\
0.1 & 0.920 & 0.912\\
0.2 & 0.918 & 0.915\\
0.5 & 0.957 & 0.957\\
0.8 & 0.957 & 0.958\\
1.0 & 0.958 & 0.955\\
\bottomrule
\end{tabular}
% }
\end{subtable}
\hfill
\begin{subtable}{.48\linewidth}
\centering
\caption{Scaling factor $k$}
% \resizebox{0.4\textwidth}{!}{%
\begin{tabular}{ccc}
\toprule
$k$ & Train Acc. & Val. Acc. \\ 
\midrule
1     & 0.542 & 0.542 \\
10    & 0.950 & 0.946 \\
100   & 0.957 & 0.957 \\
1000  & 0.958 & 0.957 \\
10000 & 0.956 & 0.956 \\
\bottomrule
\end{tabular}
% }
\end{subtable}
\hfill
\begin{subtable}{.48\linewidth}
\centering
\caption{Neural network size \\ (\#neurons x \#layers)}
% \resizebox{0.4\textwidth}{!}{%
\begin{tabular}{ccc}
\toprule
Size & Train Acc. & Val. Acc. \\ 
\midrule
32x2 & 0.951 & 0.954 \\
32x3 & 0.951 & 0.950 \\
64x2 & 0.951 & 0.951 \\
64x3 & 0.951 & 0.956 \\
128x2 & 0.954 & 0.949 \\
128x3 & 0.956 & 0.952 \\
256x2 & 0.952 & 0.951 \\
256x3 & 0.957 & 0.957 \\
\bottomrule
\end{tabular}
% }
\end{subtable}
\hfill
\begin{subtable}{.48\linewidth}
\centering
\caption{Training samples $N$}
% \resizebox{0.4\textwidth}{!}{%
\begin{tabular}{ccc}
\toprule
$N$ & Train Acc. & Val. Acc. \\ 
\midrule
200 & 0.800 & 0.663\\
400 & 0.755 & 0.651\\
800 & 0.941 & 0.836\\
1600 & 0.947 & 0.874\\
3200 & 0.952 & 0.903\\
6400 & 0.962 & 0.920\\
12800 & 0.957 & 0.942\\
25600 & 0.954 & 0.953\\
50000 & 0.957 & 0.957\\
\bottomrule
\end{tabular}
% }
\end{subtable}
\caption{Different setups in training under car benchmark.}
\label{tab:abl_nn}
\end{table}

\subsection{Ablation studies}
Here \rbt{on the traffic benchmark} we show how different parameters and architectures affect the training and the validation accuracy. As shown in Table~\ref{tab:abl_nn}, for hyperparameters such as $\gamma$, $k$ and network size, a wide range of valid values can achieve similar performances ($0.5<\gamma<1.0$, $10<k<10000$, NN from 64x3 to 256x3). As $\gamma$, $k$, the NN size and the training samples increase, the train/validation STL accuracy almost monotonously increases initially and saturates eventually. Another interesting finding is that, with only 800 training samples, we can already achieve $83.6\%$ STL test accuracy, which is slightly higher than \textbf{STL\textsubscript{M}} ($83\%$, the best classical baseline on the Traffic benchmark). This shows that our approach has a high learning efficiency.

\subsection{Limitations}
% Our algorithm does have some limitations. First, our learned policy cannot always satisfy the STL specification due to the generalization error. Although we propose a backup policy to increase STL accuracy, a more robust and time-efficient approach is needed. Besides, we explicitly encode the environmental information into the state representation, which is inefficient for more complicated and dynamic obstacles. Representation learning can be a potential solution to handle this issue. Finally, our method only considers continuous policy space; we will consider discrete policy in future work.

% Our algorithm does have some limitations. 
%TODO mention the challenges in training (gradient descent cannot always find global optimal)
% First, \rbt{our method uses gradient descent to learn control policies, which might be stuck at a local minimum. And in testing,  our learned policy cannot always satisfy the STL specification due to the approximation for the robustness score and the generalization error for unseen samples.} Although we propose a backup policy to increase STL accuracy, a more robust and time-efficient approach is needed. Besides, we explicitly encode environment information into system states, which is inefficient for more complicated and dynamic obstacles. Representation learning can \rbt{potentially} handle this issue. Finally, our method only considers continuous policy; we will consider discrete policy in the future.

First, \rbt{our gradient-based method might be stuck at a local minimum. In testing, our learned policy cannot always satisfy the STL due to the approximation for the robustness score and the generalization error.} Although we propose a backup policy to tackle it, a more robust and time-efficient approach is needed. Besides, we explicitly encode map information into states, which is inefficient for complex and moving obstacles. Representation learning might handle this issue.

\section{Conclusions}

We propose a neural network controller learning framework to fulfill STL specifications in robot tasks. Unlike RL methods, our approach learns the policy directly via gradient descent to maximize the approximated robustness score. Experimental results show that our approach achieves the highest STL accuracy compared to other approaches. A backup policy is proposed for STL monitoring process and guarantees the basic safety. In future, we aim to solve more general STL formulas and perception-based controls.

% \addtolength{\textheight}{-12cm}   % This command serves to balance the column lengths
                                  % on the last page of the document manually. It shortens
                                  % the textheight of the last page by a suitable amount.
                                  % This command does not take effect until the next page
                                  % so it should come on the page before the last. Make
                                  % sure that you do not shorten the textheight too much.

%%%%%%%%%%%%%%%%%%%%%%%%%%%%%%%%%%%%%%%%%%%%%%%%%%%%%%%%%%%%%%%%%%%%%%%%%%%%%%%%

%%%%%%%%%%%%%%%%%%%%%%%%%%%%%%%%%%%%%%%%%%%%%%%%%%%%%%%%%%%%%%%%%%%%%%%%%%%%%%%%
% \bibliographystyle{plain}
%%%%%%%%%%%%%%%%%%%%%%%%%%%%%%%%%%%%%%%%%%%%%%%%%%%%%%%%%%%%%%%%%%%%%%%%%%%%%%%%
%\section*{Appendix}
%Appendixes should appear before the acknowledgment.

\section*{Acknowledgement}
This work was partly supported by the MIT-Ford Alliance Program and National Science Foundation (NSF) CAREER Award \#CCF-2238030. Any opinions, findings, conclusions, or recommendations expressed in this publication are those of the authors and don’t necessarily reflect the views of the sponsors.

\bibliographystyle{ieeetr}
\bibliography{z8_references.bib}
% \clearpage
% \input{rebuttal/zz_statement_of_changes}
\end{document}